\documentclass[twoside]{LNGAI}

\usepackage{LNGAImacro}

\usepackage{graphicx}
\usepackage{times}
\usepackage{soul}
\usepackage{url}

\usepackage{mathrsfs}
\usepackage{array}    
\usepackage{amsmath}
 \usepackage{amssymb}
 \usepackage{algorithm}
\usepackage[switch]{lineno}
\usepackage[table]{xcolor}
\usepackage{multirow}
\usepackage{enumitem}
\usepackage{makecell} 
\usepackage{algpseudocode}  
\usepackage{booktabs}
\usepackage{subcaption}

\newcommand{\comm}[1]{}

\newcommand{\instnot}[1]{  \mathit{#1}     }

\def\lastname{Kar, Lorini and Masquelier}

\begin{document}

\begin{frontmatter}
  \title{Binary Spiking Neural Networks as Causal Models}
  \author{Aditya Kar}
  \address{Institut de Recherche en Informatique de Toulouse (IRIT),\\ Centre de Recherche Cerveau et Cognition (CerCo), CNRS, France}
  \author{Emiliano Lorini}
  \address{Institut de Recherche en Informatique de Toulouse (IRIT), CNRS, France}
  \author{Timothée Masquelier}
  \address{Centre de Recherche Cerveau et Cognition (CerCo), CNRS, France}

  \begin{abstract}
    We provide a causal analysis of Binary Spiking Neural Networks (BSNNs) to explain their behavior. 
    We formally define a BSNN 
    and   represent its  spiking activity
      as a binary causal model.
    Thanks to this causal  representation, 
    we are able to explain the output of the network
    by leveraging  logic-based  methods. 
    In particular,
    we show that we  can successfully 
    use a SAT   
    as well as a SMT solver
    to  compute 
      abductive explanations from this  binary causal model. 
    To illustrate our approach, 
    we trained the BSNN on the standard MNIST
    dataset and applied our SAT-based
    and SMT-based
    methods   to
    finding  abductive  explanations of  the network's classifications
    based on pixel-level features. We also compared the found explanations against SHAP,  a popular 
    method used in the area of explainable
    AI.
    We show that, unlike SHAP,
    our approach guarantees  that a found  explanation  does
    not contain completely irrelevant features. 
  \end{abstract}

  \begin{keyword}
  Logic, Causality, Explanation, Binary Spiking Neural Networks 
  \end{keyword}
 \end{frontmatter}

\section{Introduction}
In recent times, interest in the study of binary artificial neural networks has grown,
where binarization can occur at the level of the connection weights between the neural
units, 
at the level of their activation function, or at both levels. 
In the field of AI, binarized neural networks (BNNs) were recently
proposed by \cite{NIPS2016_d8330f85} and \cite{10.1007/978-3-319-46493-0_32},
while in neuroscience particular attention
has been paid to 
binary spiking neural networks (BSNNs) \cite{bs4nn,Lu_2020}. 
The main difference between BNNs and BSNNs is mainly due
to the presence of temporal dynamics in BSNNs over BNNs and
to the fact that in BSNNs inputs are  given sequentially in discrete time,
while they are instantaneously  presented to  BNNs. 
Binarization obviously comes with a price in terms of the size of the network parameters relative to its learning power: a binary neural network requires a considerably higher number of neural units, compared to its non-binary counterpart, in order to achieve an acceptable level of accuracy in a given classification task after training. 
Nonetheless, this disadvantage is counterbalanced by an advantage in terms of 
logical representability and therefore 
explainability.
Specifically, thanks  to the   Boolean nature
of BNNs and  BSNNs, one can represent 
their 
firing dynamics
as  binary  causal models and, consequently, explain their  behaviors   in an efficient way using logic-based methods.

The present paper is devoted to exploring this trade-off between 
accuracy and explainability 
in the context of BSNNs. 
We focus our analysis on BSNNs instead
of BNNs since,
from
the causal point of view, 
the former
are more general than
the latter
and we prefer
to concentrate on the more
general model first.
To 
fully capture the
causal structure of a BSNN, one has to model
the firing activities  of  its  neural units 
and to represent their causal dependencies over an extended time span. 
 BNNs are less general since the presentation of the input
 is not sequential and, consequently, their  dynamics
 and the resulting causal dependencies between  the neural units do not extend over time. 
We represent  
 the internal mechanism
 of a BSNN through a binary causal
 model and, thanks to this representation,
 we  explain the BSNN's behavior. Different
 notions of explanation exist
in the literature including
abductive \cite{ignatiev2019abduction}, contrastive \cite{DBLP:journals/ker/Miller21},
counterfactual \cite{DBLP:journals/corr/abs-2010-10596}
and alterfactual \cite{ijcai2024p52} explanation.
In the present paper, we rely on abductive
explanation (AXp) because of its simplicity
and its emphasis of minimality
which is a guarantee of non-redundancy. 
For a set of input features to be an abductive explanation 
of a classification by a neural network, 
it has to be \emph{minimally} sufficient to ensure the classification, i.e., 
where minimality means that 
all proper subsets of features are no longer sufficient for the classification. 
Thus, an  AXp is by definition non-redundant. 
The causal component will be essential to our analysis. Having an explicit representation of the BSNN’s causal dependencies will enable us to formally verify that all features included in an explanation are genuinely causally relevant, unlike traditional machine learning explainability methods such as SHAP.

 The paper is structured as follows. 
 After discussing related work,
 we illustrate the BSNN architecture
 as well as 
 the learning task 
 we considered, namely
 MNIST classification,   and the learning algorithm we used
 to train our BSNNs
 on the MNIST dataset. 
Then, we focus on the mathematical
aspects of our framework. First, we introduce the mathematical
model of the BSNN spiking dynamics.
Then, we map it onto a binary causal model that represents
the causal dependencies between the firing activities of the neural
units over time. 
 We then move 
to the explanation of the BSNN behavior. Specifically,
we present an algorithm that combines the binary causal model
with a SAT solver 
to compute abductive explanations of the BSNN classification,
where an abductive explanation is constructed from  pixel-level features at 
a specific time point. 
We also consider 
a variant of the abductive
explanation search
algorithm based on a SMT (Satisfiability Modulo Theory)
encoding of the binary causal model
for the BSNN architectures. 
We present some experimental results
on  computation time for  the SAT-based 
and for the SMT-based
explanation search algorithm. 
Finally, in Section
\ref{sec:shap}
we compare our logic-based approach
relying on    abductive explanation
with SHAP.  

To the best of our knowledge, this is the first attempt i) to map a BSNN onto a binary causal model, and ii) to leverage the resulting Boolean representation of causal dependencies among its neural units to explain its behavior using both SAT and SMT solvers. 

\section{Related Work}\label{sec:relwork}

We organize the discussion
of relevant literature in three parts:
binary neural networks,
causal models, 
and logic-based
explanation
of artificial neural networks.

\paragraph{Binary neural networks}
Binary  Neural Networks (BNNs)
are a class
of artificial neural networks (ANNs) 
that 
have been studied extensively by researchers \cite{QIN2020107281} in the deep learning community, especially by \cite{bengio2013estimating} and \cite{NIPS2016_d8330f85},
who provided a viable way to train these networks using standard back-prop based optimisation methods. BNNs adopt an extreme form of quantization, by resorting to binary weights and binary activation values.  \cite{Tang_Hua_Wang_2017}
have shown 
that with back-prop based methods, it is possible to train these binarized neural networks with reasonable, near full precision accuracy. Moreover, \cite{10.1007/978-3-319-46493-0_32} demonstrated a drastic reduction in computation time and model size with XNOR-Nets, due to the fact that the computationally expensive multiply-accumulate operations in deep learning can be replaced by faster XNOR and pop-count operations when using binarized networks. Hence, for these reasons, BNNs have gained significant popularity in resource-constrained, low-power, and hardware-efficient AI applications. 
Binary Spiking Neural Networks (BSNNs), the subject of the present paper, are the bio-plausible counterpart of BNNs, taking inspiration from the spiking dynamics of biological neurons in the brain. The most useful feature of BSNNs is the way they process input data using spike encodings, where spikes are binary all-or-none pulses occurring at discrete time steps, as opposed to the continuous-valued representations used in conventional ANNs (including BNNs). These spike encodings are particularly convenient, as they allow us to apply our formalism to both the pixel space and the intermediate feature space. BSNNs have been trained using both temporal \cite{bs4nn} and rate coding schemes \cite{Lu_2020}.


\paragraph{Causal models}
Causal models are mathematical objects that have been
extensively studied in AI \cite{Pearl2009}, logic \cite{Halpern2000,HalpernBook2016},
and 
in the field of explainable AI \cite{DBLP:journals/ker/Miller21}, 
given the urgent need to provide formally rigorous
causal explanations of  AI systems. 
A causal
model is a system
of structural equations 
describing the causal dependencies between variables. 
Binary causal models (BCMs)
that we use in the present work are the subclass
of causal
models in which variables are assumed to  be Boolean. 
They were studied in depth  in previous work 
\cite{DBLP:journals/jair/ChocklerH04,DBLP:journals/jair/AleksandrowiczC17,ijcai2023p366,DBLP:conf/ijcai/Lorini24}. 
Given their close connection with propositional logic, they offer the possibility to automate reasoning about causality with the aid of a SAT solver.  
 
 \paragraph{Abductive explanation of artificial neural networks}

The central concept in the field of logic-based explanations for artificial neural networks (ANNs)—and more broadly for machine learning models—is the \emph{abductive explanation} (AXp)~\cite{DBLP:journals/ai/CooperS23}, which forms the foundation of the present work. This notion builds on prior theoretical research on \emph{abduction}~\cite{DBLP:conf/fair/Marquis91} and is grounded in the concept of the \emph{prime implicant} (PI). For this reason, it is also referred to as a \emph{PI-explanation}~\cite{DBLP:conf/ijcai/ShihCD18} or a \emph{sufficient reason}~\cite{DBLP:conf/ecai/DarwicheH20}. 
Abductive explanations have been applied to both tractable models, such as monotone and linear classifiers\cite{DBLP:journals/corr/abs-2008-05803,DBLP:journals/ai/CooperS23,DBLP:conf/kr/AudemardKM20}, and intractable ones, including random forests~\cite{DBLP:conf/ijcai/Izza021}, boosted trees~\cite{DBLP:conf/ijcai/AudemardBLM23}, and artificial neural networks~\cite{DBLP:conf/kr/ShiSDC20,ignatiev2019abduction}. 
In~\cite{DBLP:conf/kr/ShiSDC20}, binary neural networks are compiled into Ordered Binary Decision Diagrams (OBDDs), which are then used to compute AXps for the networks' classifications. In contrast,~\cite{ignatiev2019abduction} employ a \emph{Mixed Integer Linear Programming} (MILP) formulation to derive AXps for a neural network’s classifications in a three-digit MNIST task. Unlike our work and that of~\cite{DBLP:conf/kr/ShiSDC20},~\cite{ignatiev2019abduction} focus on neural networks with real-valued weights.  
Our approach differs from~\cite{ignatiev2019abduction} and~\cite{DBLP:conf/kr/ShiSDC20} in two key respects. First, causality plays a central role in our framework: we map a BSNN onto a binary causal model and leverage this causal representation to generate explanations. In contrast, neither~\cite{ignatiev2019abduction} nor~\cite{DBLP:conf/kr/ShiSDC20} incorporate any notion of causality. Second, they do not consider BSNNs, whereas BSNNs are the central focus of our analysis and the type of neural networks we aim to explain using logic and causal models.

Before concluding, it is worth mentioning the work on argumentation-based explanations of multi-layer perceptrons (MLPs) presented in~\cite{DBLP:conf/ecai/AyoobiPT23}. This approach builds on the mathematical relationships between MLPs and \emph{quantitative argumentation frameworks} (QAFs) established in~\cite{DBLP:conf/aaai/Potyka21}. The proposed method first sparsifies an MLP and then maps the resulting network onto an equivalent QAF, which can be used to explain and interpret the model's underlying mechanisms and decisions.   
Although this approach takes a different perspective, without an explicit grounding in logic or causality, we believe that a connection could be drawn between QAFs and causal models with continuous variables, and thus between the MLPs studied in~\cite{DBLP:conf/ecai/AyoobiPT23} and causal models. We leave this question for future work.





\section{Architecture, Learning  and Dataset }
\label{sec:archi}

In this section, we outline the details of the neural network models that we considered, along with the exact learning task, dataset and accuracies.

\subsection{Learning task}\label{subsec:learntask}


For our training purposes, we used the MNIST classification task for hand written digit recognition. We trained
networks with a single fully
connected hidden layer on both tasks, 3-digit 
and 10-digit MNIST classification. As we will show in Table \ref{table1}, we could achieve very high accuracy with
binary quantized 
networks  on the 3-digit classification task.
We could also  achieve a high accuracy on the 10-digit classification task with three-value quantized  networks with weights ranging over 
$\{-1, 0, 1\}$. 

\subsection{Spike encoding} \label{spikEnc}
For our experiments, we used two different approaches to convert MNIST images into spikes. Firstly, we used a classic Poisson rate coding scheme \cite{rate_coding} to convert images into spike trains in multiple time-steps and also a threshold-binarized scheme with just one time-step as presented in Table \ref{table1}.
We did not pursue temporal coding in our experiments 
since, as shown by \cite{bs4nn},
temporal coding
requires  larger time-steps for training with high accuracy. Since having more time-steps significantly increases the complexity of finding an explanation, we chose to not use temporal coding in this work. 
Nonetheless, the  novel mapping
of BSNNs to binary causal models
we will present 
can be generalized to  to other forms of spike encodings. 
 We used a simple Integrate and Fire (IF) model for our spiking neurons, since 
 mapping BSNNs to binary causal
 models
 is easier in the absence of leaks.

\subsection{Weight quantization} \label{weight} 

As we will demonstrate  later  in the paper, mapping a BSNN to  a binary  causal model requires the network to have weights quantized either in a binary
(i.e., $\{0, 1\}$)  or  a 
three-valued (i.e., $\{-1, 0, 1\}$)  way. To train our networks, the weight quantization procedure that we adopted closely follows the XNOR-Net proposal by \cite{10.1007/978-3-319-46493-0_32}, i.e., during a forward pass the network uses a binarized weight matrix $\mathcal{B}(W)$, while during the backward pass it retains a proxy full-precision weight matrix $W$ for gradient calculation. Straight-through-estimator (STE) \cite{bengio2013estimating} was used without any gradient clipping for our training. 
The following equations
represent the two variants of the quantizing functions $\mathcal{B}^{bin} $ and $\mathcal{B}^{tern}$ we used:
\begin{align}
       \mathcal{B}^{\mathit{bin}}(W_{i,j})=& 
        \begin{cases}
           & 0, \text{if} \ W_{i,j} = 0, \\
      & (\textit{sign}(W_{i,j}) + 1)/2, \text{if} \ W_{i,j} \neq 0 , 
        \end{cases}\\
 \mathcal{B}^{\mathit{tern}}(W_{i,j})= & \ \ \textit{sign}(W_{i,j})   , \label{funct:tern}
   \end{align} 
with $W_{i,j}$ the $(i,j)$-coordinate of the weight matrix $W$.  
In order to train our networks through standard back-propagation based methods for supervised learning, we employed a surrogate gradient descent approach \cite{neftci2019surrogate} with \emph{arctan} as the surrogate function along with a STE  for updating binary weights \cite{bengio2013estimating}, in a way similar to  \cite{Jang2020BiSNNTS}.


\section{Formal Model of Spiking Neurons}\label{sec:modeldyn}


In this section, we  introduce the 
formal 
model of
a   binary spiking neural network (BSNN)
 and of its integrate-fire (IF) spiking dynamics. Spiking neurons have the ability to process rich temporal dynamics in the data due to the state fullness of the neurons much like in recurrent neural networks (RNNs). We first introduce the static architecture of a BSNN. 

\begin{definition}[BSNN architecture] \label{defAlternative}
The architecture of a BSNN is a tuple
   $S = \big(\mathbf{I}, \mathbf{L}, 
\mathcal{R},\mathcal{W},
\mathit{Scale} 
,(\tau_X )_{ X \in  \mathbf{L} } 
\big)$ 
where:
\begin{itemize}
    \item $\mathbf{I}$
    and $\mathbf{L}$
    are two non-empty  disjoint sets, respectively, the
    set of input neurons and the set of non-input neurons,
    with  $\mathbf{N}= \mathbf{I} \cup \mathbf{L} $ (the set of neurons); 
   \item
   $\mathcal{R}\subseteq  \mathbf{L} \times \mathbf{N} $
   is a connectivity relation 
   relating each non-input  neuron
   to its predecessors; 
   \item $\mathcal{W} :\mathcal{R} 
\longrightarrow \mathit{Scale}$
is the weighting  function
for the connectivity relation,  with
$\mathit{Scale}$
a finite scale of integers
(i.e., 
$\mathit{Scale}=
 \{k_1,  \ldots, k_n\} \subset \mathbb{Z}$
 such that 
 $
 k_1 < \ldots < k_n 
$); 
\item $\tau_X$
is the firing threshold for the non-input  neuron  $X \in \mathbf{L}$. 
\end{itemize}
\end{definition}

Given the architecture of a BSNN, 
we introduce the following notion
of BSNN-compatible fire  spiking dynamics. 
\begin{definition}[BSNN-compatible fire spiking dynamics] 
\label{defDynamics}
Let 
$S = \big(\mathbf{I}, \mathbf{L}, 
\mathcal{R},\mathcal{W},
\mathit{Scale} 
,
(\tau_X )_{ X \in  \mathbf{L} } 
\big)$
be the architecture of a BSNN
and let 
$F= (\mathcal{F}_X)_{X \in \mathbf{N}}  $
be a family
of firing functions for $S$'s
neurons,
with 
$\mathcal{F}_X  : \mathbb{N} \longrightarrow \{0,1\}$.
We say that $F$
represents
a possible
spiking dynamics 
for the BSNN $S$ up to time $\mathsf{t}_{\mathit{end}}\geq 0$,
or simply $F$ is  $S$-compatible up to time $\mathsf{t}_{\mathit{end}}$,  
if and only if
the following condition holds for every $X \in \mathbf{L}$
and for every $t\leq \mathsf{t}_{\mathit{end}}$:
   \begin{align}
       & 
       \mathcal{F}_X (t)= 
        \begin{cases}
           & 0, \text{ if }t = 0, \\
      & \Theta\big(    
    \mathcal{A}(X,t) - \tau_X 
       \big), \text{ if } t > 0 , 
        \end{cases}
        \label{funct:firing}
   \end{align}
where 
\begin{align*}
\mathcal{A}(X,t)& =
 \begin{cases}
0, \text{ if } t = 0, \\
\mathcal{A} ( X,t-1) \cdot \big(1 - \mathcal{F}_X(t-1) \big) \\ +
       \sum_{
       (X, X' ) \in \mathcal{R}   }
       \mathcal{W}(X,X' ) \cdot 
       \mathcal{F}_{X'}(t), \text{ if } t > 0,
\end{cases}
\end{align*}
and
\begin{align*}
  \Theta(x) & = 
    \begin{cases}
        1,  \text{ if } x \geq 0, \\
        0, \text{ otherwise}. 
    \end{cases}
\end{align*}
\end{definition}
Some explanations of the previous two definitions are in order.
The weighting function $\mathcal{W}$
in  Definition \ref{defAlternative}
specifies
for each  non-input neuron $X \in \mathbf{L}$
and each   
predecessor $X' \in  \mathcal{R}(X) $
the weight of the 
connection from
$X'$ to $X$,
with $\mathcal{R}(X) =\big\{  X' \in 
 \mathbf{N} : (X,X') \in\mathcal{R}  \big\}$.
In the general model,
a weight can take any value from
the set of numerical values $\mathit{Scale}$.
In  the rest of our paper we will  only consider the 
BSNN
variants
of the model with 
$\mathit{Scale}= \{-1, 0, +1\}$
or 
$\mathit{Scale}= \{ 0, +1\}$.
From a mathematical point of view,
BSNNs are nothing but special cases of SNNs with either Boolean 
or three-valued weights. 

Note that by means of the connectivity
relation $\mathcal{R} $ we can specify
the set of output neurons 
$\mathbf{O}$ as the non-input neurons
that have no successors, that is,
\begin{align*}
    \mathbf{O} = \big\{ X \in 
     \mathbf{L} : 
       \forall   X'  \in \mathbf{L},  (X', X) \not \in \mathcal{R}
     \big\}. 
\end{align*}

Definition
\ref{defDynamics}
describes  the possible
 spiking dynamics
of a BSNN   $S$.
In particular, 
 the firing  function $\mathcal{F}_X$  represents
a possible dynamics of the non-input neuron $X$ in the BSNN architecture: it is  the Heaviside step function
of the difference between the neuron's activation value  
 and the spiking threshold $\tau_X$. 
The firing activity of the input neurons does not depend on the firing  activity of other neurons, it is  uniquely determined by the temporally sequential presentation of the input. This is the reason why the condition for $\mathcal{F}_X$ only applies to the case $X\in \mathbf{L} $. 

The activation value of the non-input   neuron $X$ at time $t$ 
depends recursively on its value at time $t-1$ and a weighted sum over the incoming stimulus at time $t$. Therefore, to respect the recursive nature of the activation function, we have to define that at time $0$, the network is completely inactive, i.e.,  no node $X \in \mathbf{N}$ is firing at time $t=0$. 
Moreover, the incoming stimulus gets perfectly integrated as in an Integrate-Fire (IF) model, without any leak in the neurons. But there is a hard reset term in our neuron model, which resets the activation value to zero every time it fires a spike.

 The BSNN architectures we trained for the MNIST classification
 task we informally described above 
 are  specific instances  of Definition \ref{defAlternative}. Specifically,
 each network has $28 \times 28$
 input neurons, one neuron per pixel in the image to be classified.
 That is,
$  
   \mathbf{I}  =   \big\{ \instnot{I}_{x,y} : 1 \leq x,y \leq 28 \big\}$.
   
   Moreover, it has either 8, 16, 32, 64 or 128 hidden neurons
   in the hidden   layer
   that are fully connected to the input neurons, that is, 
   given  $k\in \{8,16,32,64,128\}$:
$     
   \mathbf{H}^{} =   \big\{ \instnot{H}_{z} : 1 \leq z \leq k \big\}$, and 
$
\forall \instnot{H}     \in \mathbf{H} ,
\forall \instnot{I}    \in \mathbf{I} ,
(  \instnot{H} , \instnot{I} ) \in \mathcal{R}$. 

   Finally, it has 10 classification  neurons
   in the output layer, one neuron for each digit
   to be recognized in the general
   MNIST classification task
      that are fully connected to the hidden  neurons, 
   that is,  
$
   \mathbf{C}   =  
   \big\{ \instnot{C}_{z} : 1 \leq z  \leq 10  \big\} $,
and $
\forall \instnot{H}_z   \in \mathbf{H} ,
\forall \instnot{C}_{z' }   \in \mathbf{C}  , 
(\instnot{C}_{z'}, \instnot{H}_z  ) \in \mathcal{R} $. 

Thus,  we considered a BSNN with a set of non-input  neurons  $ \mathbf{L}= \mathbf{H}  \cup  \mathbf{C}$. 
Notice that
in this BSNN architecture 
the set of classification neurons coincides
with the set of output neurons, that is, $   \mathbf{O}=
  \mathbf{C}$.

  \begin{table*}[t]
\centering
\resizebox{1.2\textwidth}{!}{%
\begin{tabular}{|c|c|c|c|c|c|c|}
\hline
Model type & Number of hidden neurons (k) & Digits & Spike encoding & Time-steps ($t_{end}$) & Validation Accuracy (\%) & Test Accuracy (\%) \\ \hline
  & 32 & & Poisson & 16 & 92.98 & 94.29 \\
  & 16 & & Poisson & 16 & 94.68 & 94.62 \\
  & 8  & & Poisson & 8  & 95.20 & 95.27 \\
  & 32 & & Thresholded & 1 & 92.47 & 93.63 \\
  & 16 & & Thresholded & 1 & 92.09 & 91.66 \\
  \multirow{-6}{*}{$\mathscr{S}_{k}^{\mathit{bin}}$} & 8 & \multirow{-6}{*}{1,5,9} & Thresholded & 1 & 91.29 & 93.41 \\ \hline
  \multirow{6}{*}{$\mathscr{S}_{k}^{\mathit{tern}}$} & 128 & \multirow{6}{*}{0,1,2,3,4,5,6,7,8,9} & Poisson & 4 & 92.00 & 92.16 \\
  & 64 & & Poisson & 4 & 91.82 & 92.03 \\
  & 32 & & Poisson & 4 & 90.55 & 91.06 \\
  & 128 & & Thresholded & 1 & 86.56 & 87.00 \\
  & 64 & & Thresholded & 1 & 84.97 & 86.10 \\
  & 32 & & Thresholded & 1 & 85.12 & 85.03 \\ \hline
\end{tabular}%
}
\caption{Accuracies of different BSNN architectures trained on the MNIST classification task.}
\label{table1}
\end{table*}

 The class of 
BSNN architectures  
  with binary weights
  are denoted by
    $\mathscr{S}_{k}^{\mathit{bin} }$  while
 those 
  with three-valued  weights
  are denoted by
    $\mathscr{S}_{k}^{\mathit{tern} }$, 
    depending
    on the number $k$ of their hidden units. 
We  only trained  and tested 12 variants of 
BSNN networks
varying along the three dimensions:
the specific
spike encoding used (Poisson vs.
threshold binarized), 
as detailed above, the weight quantization
used ($\{0,1\}$ vs. $\{-1, 0,1\}$), and the number $k\in \{8,16,32, 64, 128\}$ of hidden
units.
For each  variant, 
the value of $\mathcal{W}(X,X') $
for each $(X,X') \in\mathcal{R} $
was determined through  learning.
Specifically,
we have three networks
for each of the following four cases: i) 
  binary weights,   Poisson encoding and $k\in \{8,16,32\}$; 
  ii)     
  binary weights,   threshold binarized encoding and
  $k\in \{8,16,32\}$; 
    iii)  
  three-valued  weights,   Poisson  encoding and
  $k\in \{32,64,128\}$; 
      iii)    
  three-valued  weights,   threshold binarized encoding and
  $k\in \{32,64,128\}$.



\section{Causal Model} \label{causalMod}


 A  causal
 model
 is a mathematical
 object  describing the causal
 dependencies between variables.
 It is
a central concept of  current analyses  of causality
 in AI. 
 A binary causal model (BCM)
 is nothing but a causal
 model
 in which variables are assumed to be Boolean. 
 In a BCM
 causal information
 is expressed by means of Boolean
 expressions (\emph{alias}
 propositional formulas),
 the set of 
  Boolean expressions being  generated inductively as follows:
i)  each Boolean variable $p$
is a Boolean expression;
ii) if $\omega$
is a Boolean expression, so 
is $\neg \omega$ (``negation''); 
iii) if $\omega_1$
and $\omega_2$
are Boolean expressions,
so is $\omega_1\wedge \omega_2$
(``conjunction''). Additional
Boolean 
constructs
$\top$, $\bot $,
$\vee$, 
$\rightarrow$
and $\leftrightarrow$
are definable as abbreviations in the usual way. 
In formal terms, a BCM  is a triplet 
$\Gamma = (\mathbf{U}, 
\mathbf{V},\mathcal{E })$
where i) $\mathbf{U}$
    is a set of exogenous  variables, 
    ii) $\mathbf{V}$
    is a set of endogenous  variables, 
    iii) $\mathcal{E }$
    is a function mapping  each 
   endogenous variable $p  \in \mathbf {V}$
    to a Boolean expression $\mathcal{E }(p) $ of the form  
   $p \leftrightarrow \omega_p$, 
where 
$\omega_p $
is a Boolean expression built
from $\mathbf{U}\cup 
\mathbf{V}$ that does not contain $p$. 
Specifically, the Boolean expression  $  p \leftrightarrow \omega_p$
stipulates  that the endogenous variable  $p$ is true iff the condition 
$\omega_p$
is true. It can be seen as the compact representation
of a Boolean function for the endogenous variable  $p$. 
From a binary causal model 
$\Gamma = (\mathbf{U}, 
\mathbf{V}, \mathcal{E } )$ it is straightforward to extract a causal graph representing
the causal dependencies between the variables:
the vertices of the causal graph are the variables in
$\mathbf{U}\cup 
\mathbf{V}$, and we draw an edge from a
variable $q$
to an endogenous variable $p$
if the Boolean
expression $\omega_p$
such that $\mathcal{E }(p)= 
p \leftrightarrow \omega_p $
  contains the variable $q$.

The model of the BSNN 
given in Definition \ref{defAlternative}
can be mapped onto a BCM 
that represents the causal dependencies between the BSNN's neural units over time. The idea of the mapping is simple: we assign a Boolean variable $p_{X,t}$ to each neuron $X$ for each time $t$ in $\{0,\ldots, \mathsf{t}_{\mathit{end}}\}$,
where  $\mathsf{t}_{\mathit{end}}$ 
is the final time step at which the network stops receiving incoming spike train from the image currently being presented. 
The variable $p_{X,t}$ is true (resp. false)
if the neuron $X$
fires (resp. does not fire) at  time $t$. 
The  exogenous variables are for the input
neurons, while the endogenous ones are for the non-input  neurons. 
The causal dependencies between the firing activities of the neurons are represented by the Boolean equations. Here, 
we only give the BCM
for the variants of the BSNN
with Boolean weights
$\{0,1\}$. 
\begin{definition}[BCM  for BSNN with Boolean weights]\label{def:causalmodel}
Let 
$S = \big(\mathbf{I}, \mathbf{L}, 
\mathcal{R},\mathcal{W},
\{0,1\} 
,
(\tau_X )_{ X \in  \mathbf{L} }
\big)$
be the architecture of a BSNN with Boolean weights
in the sense of Definition \ref{defAlternative}.
The BCM for  $S$
is the triplet 
$\Gamma_S  = \big(\mathbf{U}_S, 
\mathbf{V}_S, \mathcal{E }_S
\big)$
where
$   \mathbf{U}_S=  \bigcup_{0 \leq t \leq
\mathsf{t}_{\mathit{end}}
}  \mathbf{U}_S^t$, $   \mathbf{V}_S=  \bigcup_{0 \leq t \leq
\mathsf{t}_{\mathit{end}}
}  \mathbf{V}_S^t$,
$\mathbf{U}_S^t = \{ p_{X,t} : X \in \mathbf{I} \}$, 
$ \mathbf{V}_S^t = \{ p_{X,t} : X \in \mathbf{L} \}$,
and 
$\forall X \in \mathbf{L} $:
  {\footnotesize
 \begin{align*}
     \mathcal{E }_S(  p_{X,0} ) :=
    p_{ X , 0 } \leftrightarrow   \bot ,   
 \end{align*}
 }
     and
  for   $t>0$:
  {\footnotesize
\begin{align*}
     \mathcal{E }_S(  p_{X,t} ) :=
              p_{ X , t } 
          \leftrightarrow  
 \Bigg(  & \Big( \neg p_{ X , t-1} 
       \rightarrow 
    \bigvee\limits_{ \substack{ \Omega \subseteq \mathcal{R}^+(X):   \\  
      \mathcal{A}(X,t-1)  
    +|\Omega | \geq   \tau_X }} \big(  \bigwedge\limits_{X' \in \Omega } p_{X', t } \big) \Big)  \notag \\
&      \wedge  \Big(   p_{ X , t-1} \rightarrow 
    \bigvee\limits_{ \substack{ \Omega  \subseteq \mathcal{R}^+(X) : 
     \\     |\Omega | \geq   \tau_X }} \big(\bigwedge\limits_{X' \in \Omega } p_{X', t }\big) \Big) \Bigg),  \label{equationcausal2}
\end{align*} 
}
\noindent with  
\begin{align*}
    \mathcal{R}^+(X)= \big\{ X' \in \mathbf{N}
: (X,X') \in \mathcal{R} 
\text{ and } \mathcal{W}(X,X')=
1
\big\}. 
\end{align*}
\end{definition}
We conclude this section by showing that 
the  spiking dynamics of a BSNN 
are correctly represented by its  BCM. Specifically, 
let 
$S = \big(\mathbf{I}, \mathbf{L}, 
\mathcal{R},\mathcal{W},
\{0,1\} 
,(\tau_X )_{ X \in  \mathbf{L} }
\big)$
be a BSNN with Boolean weights 
and 
$\mathcal{I }$
a Boolean  interpretation
for the variables in $\mathbf{U}_S \cup \mathbf{V}_S$,
i.e., 
$\mathcal{I } :
\mathbf{U}_S \cup \mathbf{V}_S
\longrightarrow \{0,1\}
$, 
such that
for every time $t\in \{0,\ldots, \mathsf{t}_{\mathit{end}}\} $
and for every neuron $X$, 
the function 
$\mathcal{F}_X$ assigns
to time  $t$
the same
value
assigned by 
 the interpretation 
$\mathcal{I}$ to the corresponding  variable
$p_{X,t}$.
Then, 
the  family of firing
functions 
$ F=(\mathcal{F}_X)_{X \in \mathbf{N}  }$
is 
$S$-compatible up to time $\mathsf{t}_{\mathit{end}}$
if and only if 
$\mathcal{I }$
satisfies
all Boolean equations
of the BCM
$\Gamma_S  = \big(\mathbf{U}_S, 
\mathbf{V}_S, \mathcal{E }_S
\big)$
for  $S$.
This 
correspondence between a BSNN and 
its  BCM  
is formally expressed by the following
Theorem \ref{theo:character} where, for any  Boolean expression $\omega $, 
$ \mathcal{I } \models \omega $
denotes the fact that the Boolean interpretation 
$  \mathcal{I }$
satisfies the Boolean expression $\omega$.
For the readers unfamiliar with Boolean (propositional) logic, 
we remind that 
$ \mathcal{I } \models \omega $
iff $\mathit{Val}( \mathcal{I }, \omega )=1 $, 
where 
$\mathit{Val}( \mathcal{I }, \omega )$
is   defined inductively, as follows:
i) 
$\mathit{Val}( \mathcal{I }, p )= \mathcal{I }(p)
\text{ for } p \in (\mathbf{U}_S \cup \mathbf{V}_S) $;
ii)  $ \mathit{Val}( \mathcal{I },  \neg \omega )=
1 -\mathit{Val}( \mathcal{I },  \omega )$;
iii)  $ \mathit{Val}( \mathcal{I },   \omega_1\wedge \omega_2 )=
\min \big(\! \mathit{Val}( \mathcal{I },   \omega_1 ){,} 
\mathit{Val}( \mathcal{I },    \omega_2 ) \big) $. 
\begin{theorem}\label{theo:character}
Let $ \mathcal{I }(p_{X,t })= \mathcal{F}_X(t)
$ for all 
$  X \in \mathbf{N}$
and 
for all 
$  t \leq  \mathsf{t}_{\mathit{end}}$.
Then, the following  are equivalent:
\begin{itemize}
    \item $ (\mathcal{F}_X)_{X \in \mathbf{N}  } $
  is 
  $S$-compatible up to time $\mathsf{t}_{\mathit{end}}$, 
  
    \item $ \mathcal{I }
  \models \bigwedge_{ p_{X,t} \in  \mathbf{V}_S}
  \mathcal{E}_S(p_{X,t})$. 
\end{itemize}

\end{theorem}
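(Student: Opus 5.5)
The proof goes neuron by neuron and time step by time step. Both conditions are conjunctions indexed by the same pairs $(X,t)$ with $X\in\mathbf{L}$ and $0\le t\le \mathsf{t}_{\mathit{end}}$. It therefore suffices to show, for each such pair, that the firing equation (\ref{funct:firing}) holds at $(X,t)$ if and only if $\mathcal{I}\models\mathcal{E}_S(p_{X,t})$.

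There is one subtlety. The formula $\mathcal{E}_S(p_{X,t})$ mentions the numeral $\mathcal{A}(X,t-1)$. This quantity is computed from the functions $\mathcal{F}$ by the recursion of Definition \ref{defDynamics}, and it is defined for any family $F$, compatible or not. So it is a fixed integer once $F$ is fixed, and the pairwise equivalence is meaningful. For $t=0$ the claim is immediate: $\mathcal{F}_X(0)=0$ iff $\mathcal{I}(p_{X,0})=0$ iff $\mathcal{I}\models p_{X,0}\leftrightarrow\bot$.

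For $t>0$ we use the Boolean weights. Since $\mathcal{W}(X,X')\in\{0,1\}$, the weighted sum equals $\sum_{X'\in\mathcal{R}^+(X)}\mathcal{F}_{X'}(t)=|\Omega_t|$, where $\Omega_t=\{X'\in\mathcal{R}^+(X):\mathcal{I}(p_{X',t})=1\}$. The key lemma is a monotonicity fact. For any integer $c$,
$$\mathcal{I}\models\bigvee_{\Omega\subseteq\mathcal{R}^+(X):\,|\Omega|\ge c}\ \bigwedge_{X'\in\Omega}p_{X',t}\quad\text{iff}\quad|\Omega_t|\ge c.$$
For the left-to-right direction, a satisfied disjunct $\Omega$ satisfies $\Omega\subseteq\Omega_t$, so $|\Omega_t|\ge|\Omega|\ge c$. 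For the right-to-left direction, take $\Omega=\Omega_t$ itself. This also covers $c\le 0$, via $\Omega=\emptyset$, whose empty conjunction is $\top$.

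Next split on $\mathcal{F}_X(t-1)$. If $\mathcal{F}_X(t-1)=0$, the reset factor is $1$, so $\mathcal{A}(X,t)=\mathcal{A}(X,t-1)+|\Omega_t|$. Firing then means $|\Omega_t|\ge\tau_X-\mathcal{A}(X,t-1)$. The second conjunct of the equation is vacuously true, and by the lemma the first conjunct's consequent is exactly this condition. If $\mathcal{F}_X(t-1)=1$, the hard reset gives $\mathcal{A}(X,t)=|\Omega_t|$, so firing means $|\Omega_t|\ge\tau_X$. This matches the second conjunct, and now the first is vacuous. In both cases the right-hand side of $\mathcal{E}_S(p_{X,t})$ evaluates to $\Theta(\mathcal{A}(X,t)-\tau_X)$. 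Hence the biconditional holds iff $\mathcal{F}_X(t)=\Theta(\mathcal{A}(X,t)-\tau_X)$.

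The main obstacle is bookkeeping rather than mathematics. One must read the condition $\mathcal{A}(X,t-1)+|\Omega|\ge\tau_X$ in Definition \ref{def:causalmodel} correctly, as a side condition evaluated at the given $F$, so that $\mathcal{A}$ is well defined independently of compatibility. One must also handle the degenerate index sets carefully, taking the empty conjunction as $\top$ and the empty disjunction as $\bot$.

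Finally, take the conjunction over all $(X,t)$. Compatibility is the conjunction of the per-pair firing equations. Satisfaction of $\bigwedge\mathcal{E}_S(p_{X,t})$ is the conjunction of per-pair satisfactions. The two are therefore equivalent.
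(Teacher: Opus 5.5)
Your proof is correct and follows the same route as the paper's proof. Both reduce to each pair $(X,t)$, handle $t=0$ via $p_{X,0}\leftrightarrow\bot$, and for $t>0$ show that the right-hand side of $\mathcal{E}_S(p_{X,t})$ evaluates under $\mathcal{I}$ to $\Theta(\mathcal{A}(X,t)-\tau_X)$, with $\mathcal{A}$ computed from the given $F$; the paper argues each direction by contradiction and calls that evaluation a ``routine mathematical exercise,'' while your monotonicity lemma and case split on $\mathcal{F}_X(t-1)$ actually carry it out.
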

The proof of the theorem is given 
in the appendix \ref{bcm:proofs1}
at the end of the paper.

\section{Explanation}\label{sec:expl}

In this section,
we are going to show 
how to use 
binary causal
models (BCMs) for 
formalizing  and computing explanations
in the context of the BSNN
architectures
we trained 
for the MNIST classification
task. Following the literature
on abductive explanation (AXp) \cite{ignatiev2019abduction,DBLP:journals/logcom/LiuL23},
we define it to be a prime implicant
that is actually true.
Moreover, we define it in relation
to a binary causal model.
For simplicity, we assume an AXp (the \emph{explanans}) 
is a term 
  made
of exogenous variables and the property
to be explained (the \emph{explanandum}) is a Boolean
expression made of endogenous ones. This
assumption is perfectly
compatible with our application to the MNIST
classification task in which
we want to explain the network classification on the basis
of the pixel-level features. Nonetheless, this assumption could be dropped without consequence; we would only need to assume that the explanans and the explanandum involve different variables. 

Some preliminary notions
are needed before defining AXp formally.
We define a \emph{term}
to be 
a conjunction of literals  in which a   variable
can occur at most once, a literal being
a variable $p $
or its negation $\neg p$.
Terms are denoted by
$\lambda, \lambda', \ldots$
Given two terms $\lambda , \lambda'  $,
with a bit of abuse of notation,
we write $\lambda' \subseteq \lambda$
(resp. $\lambda' \subset \lambda $)
to mean that the set of literals
appearing in $\lambda'$
is a subset (resp. strict subset)  of the set of literals
appearing in $\lambda$. 
Given a 
 BCM $\Gamma = \big(\mathbf{U}, 
\mathbf{V},\mathcal{E }
\big)$
and an arbitrary set of variables 
$\mathbf{X }\subseteq \mathbf{U}\cup \mathbf{V}$, $\mathit{Term}_{\mathbf{X}}$
denotes 
the set of terms built from
$\mathbf{X }$.

\begin{definition}[Abductive explanation]\label{def:axpdef}
Let $\Gamma = \big(\mathbf{U}, 
\mathbf{V},\mathcal{E }
\big)$
be a BCM,  $\mathcal{I }_{\mathbf{U} } :
\mathbf{U}  
\longrightarrow \{0,1\}
$
a 
Boolean
interpretation for
its exogenous   variables,
$\lambda \in \mathit{Term}_{\mathbf{U}} $  
and $\omega_0$
a Boolean expression 
built from $   \mathbf{V}$. 
We say that $\lambda$
is an abductive explanation
(AXp) of
$\omega_0$
with respect  to $\Gamma $ and  $\mathcal{I }_{\mathbf{U} } $
if and only if:\label{AXp}
\begin{align*} 
 i)  & \ \mathcal{I }_{\mathbf{U} }  \models \lambda  ,\\
    ii)  &
    \models
    \big(\bigwedge_{p \in \mathbf{V} }
    \mathcal{E}(p) \wedge 
    \lambda \big)  \rightarrow \omega_0,\\
        iii) & \
        \forall \lambda' \subset \lambda, \not
        \models  \big(\bigwedge_{p \in \mathbf{V} }
    \mathcal{E}(p) \wedge 
    \lambda'  \big) \rightarrow \omega_0 ,
\end{align*}
where, for a given Boolean expression $\omega$ built from the set of variables $\mathbf{U} \cup
\mathbf{V}$, $\models \omega$
means that $\omega $
is valid, i.e., 
$\mathcal{I}\models \omega$
for every Boolean interpretation
$\mathcal{I} \in \{0,1\}^{\mathbf{U} \cup
\mathbf{V} } $.
\end{definition}
Let us illustrate how Definition \ref{def:axpdef}
applies to  a BSNN $S $ with Boolean weights 
(i.e., $S \in \mathscr{S}_{k}^{\mathit{bin} }$)
trained on the MNIST three-digit classification task. 
Given  an input sequence 
$\mathit{input}:\{0,\ldots, \mathsf{t}_{\mathit{end}}\} \times \mathbf{I} \longrightarrow \{0,1\} $
and an observed  output sequence 
$\mathit{out}:\{0,\ldots, \mathsf{t}_{\mathit{end}}\} \times \mathbf{C} \longrightarrow \{0,1\} $
for this input, 
we  aim to abductively explain   
 the output at a chosen  time  $t \in \{0,\ldots, \mathsf{t}_{\mathit{end}}\}$
using only variables for  the input at time $t$. 
More precisely,
we take  the \emph{explanandum} (i.e., $\omega_0$)
to be
the Boolean expression 
\begin{align*}
   \mathsf{out}_{S,t }  =_{\mathit{def}}
\bigwedge_{
\substack{ 
\instnot{C}_z \in \mathbf{C}:\\ \mathit{out}(t, \instnot{C}_z)=1 } }  p_{\instnot{C}_z ,t   } \wedge \bigwedge_{ \substack{  \instnot{C}_z
\in \mathbf{C}
:\\ \mathit{out}(t, \instnot{C}_z)=0 } }  \neg p_{\instnot{C}_z ,t   } . 
\end{align*}
It 
represents the observed 
output of the network at time $t$.
Then,   we  search  for an abductive explanation 
$\lambda \in \mathit{Term }_{\mathbf{U}_{S}^t }$
of $\mathsf{out}_{S,t }  $
with respect 
to the BCM $\Gamma_{S} $ 
and to the Boolean interpretation $\mathcal{I}_{ \mathbf{U}_{S} }$
encoding the input sequence $\mathit{input}$
(i.e.,  $\mathcal{I}_{ \mathbf{U}_{S} }  ( p_{ \instnot{I  }_{x,y}, t  }       )=
\mathit{input}(t,\instnot{I  }_{x,y} )$
for every $t \in\{0,\ldots, \mathsf{t}_{\mathit{end}}\}  $
and $\instnot{I }_{x,y} \in \mathbf{I }$). 
The latter
condition guarantees that
the found explanation 
of the network's output at time $t$ 
represents 
a portion of 
the actual input presented to  the network at  $t$. 

The following proposition  highlights an important property
of  a BSNN's abductive explanation:
any input feature/neuron    being mentioned in an abductive explanation 
of the output has  necessarily 
a non-zero weight connection with  the network's hidden layer.
This 
guarantees that an abductive explanation does not contain completely 
irrelevant
information. Later in the paper, 
we will contrast this result
with the SHAP explanation method
for which there is no guarantee that a found explanation
does not contain completely  irrelevant information. 
\begin{proposition}\label{prop:existencelink}
    Let $\lambda \in \mathit{Term }_{\mathbf{U}_{S }^t  }$
    be  an abductive explanation of 
$ \mathsf{out}_{S,t } $. Then, 
\begin{align*}
    \forall p_{\instnot{I}_{  }, t } \subseteq \lambda,
    \exists \instnot{H}_{ } \in \mathbf{H}  \text{ such that }
   \instnot{I}  \in  \mathcal{R}^+ ( \instnot{H}_{ }), 
\end{align*}
where we recall 
$\mathcal{R}^+( \instnot{H}_{ })= \big\{  \instnot{I}_{  } \in  \mathbf{I} 
: (\instnot{H}_{ }, \instnot{I}_{ }) \in \mathcal{R} 
\text{ and } \mathcal{W}(\instnot{H}_{ }, \instnot{I}_{  }) =
1
\big\}$. 
\end{proposition}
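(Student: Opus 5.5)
We argue by contradiction using the minimality condition iii) of Definition \ref{def:axpdef}. Suppose $\lambda$ is an AXp of $\mathsf{out}_{S,t}$ and some literal $\ell \in \{p_{\instnot{I},t}, \neg p_{\instnot{I},t}\}$ occurs in $\lambda$, with $\instnot{I} \notin \mathcal{R}^+(\instnot{H})$ for every $\instnot{H}\in\mathbf{H}$. (The statement writes $p_{\instnot{I},t}\subseteq\lambda$; we treat both polarities of the variable, since the argument is identical.) Let $\lambda' = \lambda\setminus\{\ell\}$, so $\lambda'\subset\lambda$. We will show that condition ii) still holds for $\lambda'$, which contradicts iii).

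\textbf{Step 1: the variable is absent from the relevant equations.} In the architecture $\mathscr{S}^{\mathit{bin}}_k$ the only successors of input neurons are hidden neurons. By Definition \ref{def:causalmodel}, the equation $\mathcal{E}_S(p_{X,t'})$ mentions a variable $p_{X',t'}$ with $X'\neq X$ only when $X'\in\mathcal{R}^+(X)$. Since $\instnot{I}\notin\mathcal{R}^+(\instnot{H})$ for all $\instnot{H}$, and no classification neuron has an input neuron as predecessor, the variable $p_{\instnot{I},t}$ occurs in no equation of $\Gamma_S$. It also does not occur in $\mathsf{out}_{S,t}$, which is built from endogenous variables only.

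\textbf{Step 2: validity is preserved.} Take any interpretation $\mathcal{I}$ with $\mathcal{I}\models \bigwedge_p\mathcal{E}_S(p)\wedge\lambda'$. Let $\mathcal{I}^*$ agree with $\mathcal{I}$ everywhere except that it sets $p_{\instnot{I},t}$ so that $\ell$ becomes true. Then:
\begin{itemize}
\item $\mathcal{I}^*\models\lambda$, because the variable of $\ell$ appears only once in the term $\lambda$;
\item $\mathcal{I}^*\models \bigwedge_p\mathcal{E}_S(p)$, because the equations do not contain $p_{\instnot{I},t}$ and their truth value is therefore unchanged.
\end{itemize}
By condition ii) for $\lambda$, we get $\mathcal{I}^*\models\mathsf{out}_{S,t}$. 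Since $\mathsf{out}_{S,t}$ also does not contain $p_{\instnot{I},t}$, it follows that $\mathcal{I}\models\mathsf{out}_{S,t}$. Hence $\models(\bigwedge_p\mathcal{E}_S(p)\wedge\lambda')\rightarrow\mathsf{out}_{S,t}$, contradicting iii).

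\textbf{Main obstacle.} The delicate point is Step 1: checking syntactically that the disjunctions over $\Omega\subseteq\mathcal{R}^+(X)$ in Definition \ref{def:causalmodel} range only over weight-$1$ predecessors. Predecessors with weight $0$ therefore never appear as variables. The term $\mathcal{A}(X,t-1)$ that occurs in the index condition only selects which disjuncts are present; it introduces no variable. We will also use the standard coincidence lemma: the value of a formula depends only on the variables occurring in it, which follows by a straightforward induction on $\mathit{Val}$.
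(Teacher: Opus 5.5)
Your proposal is correct and follows essentially the same route as the paper's proof. Both argue by contradiction: if the input neuron has no weight-$1$ link to any hidden neuron, its variable occurs neither in any structural equation nor in $\mathsf{out}_{S,t}$, so removing its literal preserves condition ii) and contradicts minimality iii). Your reassignment argument in Step 2 spells out the direction of the paper's ``iff'' step (v) that is actually needed, and your treatment of negative literals is a harmless extension of the paper's positive-literal case.
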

The proof of the proposition is given 
in the Appendix  \ref{bcm:proofs2}
at the end of the paper.


To compute an abductive explanation, we rely
on a standard abductive explanation search
algorithm, whose pseudo code is presented in Algorithm \ref{alg:AXp}. 
The algorithm is initialized with a complete term $\lambda_{\mathit{init}}$ over the set of exogenous variables (i.e., $\mathbf{U}_{S }$), which fully represents the actual input at the selected time $t$. Then, literals are systematically removed from $\lambda_{\mathit{init}}$, and at each iteration, we check whether condition (ii) in Definition \ref{AXp} is still satisfied.

    \begin{algorithm} 
    \caption{Computing Abductive Explanation}\label{alg:AXp}
    \begin{algorithmic}
    \Require Initial implicant $\lambda_{init}$ and explanandum $\omega_0$ that satisfy conditions (i) and (ii) in Def. \ref{AXp}
    \Ensure Abductive explanation $\lambda$
    \State Set $\lambda$ = $\lambda_{init}$
    \For{\textit{l} $\in \lambda$}
        \If{ $\models \big(\bigwedge_{p \in \mathbf{V} }
    \mathcal{E}(p) \wedge 
    \lambda \big)  \rightarrow \omega_0$ }
            \State $\lambda \rightarrow \lambda \setminus \textit{l}$
        \EndIf
    \EndFor
    \State \Return{$\lambda$}
    \end{algorithmic}
    \end{algorithm}

At the end of the search algorithm we further verify
the validity of condition (iii) in Definition \ref{AXp} for a \emph{prime implicant} check of the resulting abductive explanation  $\lambda$.
Algorithm \ref{alg:AXp} has a time complexity of $\mathcal{O}(|\mathbf{U}_{S_k^{bin}}|)$ which is the total number of exogenous variables in the model. This linear dependency guarantees the scalability of the algorithm with respect to the number of input neurons. 

\section{Experimental Results}\label{sec:results}

In this section, we provide the experimental results 
on computing explanations  for some 
of the BSNNs listed  in Table \ref{table1}. 
We implemented the AXp search Algorithm \ref{alg:AXp} using the open-source $Z3$ solver, which is an efficient and flexible theorem proving system
 implemented in Python developed by Microsoft Research.
Since the time required to compute explanations using the SAT-based Algorithm 1 was very high (on the order of hours), we also considered a variant of this algorithm based on an SMT (Satisfiability Modulo Theories) representation of the binary causal model for the BSNNs and of the notion of abductive explanation. (See \cite{BSST09} for a general
introduction to SMT.) In particular,
  SMT over Linear Integer
Arithmetic (LIA) was sufficient for our purpose. 
We could again use Z3
given that it   fully supports SMT.
The exact representation can be found  in the 
Appendix \ref{appendix:smt}
at the end of the paper.


\begin{table}[h]
    \centering
    \begin{subtable}{0.74\textwidth}
        \centering
        {\scriptsize
        \resizebox{\linewidth}{!}{%
        \begin{tabular}{|c|cc|cc|}
            \hline
            \multirow{2}{*}{Number of hidden neurons ($k$)} & \multicolumn{2}{c|}{Mean search time} & \multicolumn{2}{c|}{Length of found explanation} \\ \cline{2-5}
             & SAT (hrs) & SMT (s) & (\%) Total features & Mean \\ \hline
            32 & 10.7 & 491 & 20.91 & 164 \\
            16 & 5.84 & 483 & 27.3 & 214 \\
            8  & 11.13 & 192 & 12.5 & 98  \\ \hline
        \end{tabular}
        }
        }
          \caption{Results for classes  $\mathscr{S}_{k}^{\mathit{bin}}$}
        \label{tab:subtable1}
    \end{subtable}
    \begin{subtable}{\textwidth}
        \centering
        {\scriptsize
        \begin{tabular}{|c|c|cc|}
            \hline
            \multirow{2}{*}{Number of hidden neurons ($k$)} & Mean search time  & \multicolumn{2}{c|}{Length of found explanation} \\ \cline{3-4}
             & SMT (hrs) & (\%) Total features & Mean \\ \hline
            128 & 0.27  & 56 & 437 \\
            64  & 0.78  & 55 & 432 \\ 
            32  & 1.0   & 36 & 280 \\
            \hline
        \end{tabular}
        }
        \caption{Results for classes  $\mathscr{S}_{k}^{\mathit{tern}}$}
        \label{tab:subtable2}
    \end{subtable}
    \caption{Computational analysis for searching explanation 
    of BSNNs in the classes 
    $\mathscr{S}_{k}^{\mathit{bin}}$ and $\mathscr{S}_{k}^{\mathit{tern}}$.}
    \label{fig:combined_tables}
\end{table}

Table \ref{tab:subtable1}  provides a comprehensive overview of the run-times for the SAT-based and SMT-based versions of the explanation search algorithm, along with the length of the AXp found for each BSNN in the class  $\mathscr{S}_{k}^{\mathit{bin}}$ listed in Table \ref{table1}, with $k \in \{8, 16, 32\}$. 
As the table clearly shows,   the SMT-based
approach   is 
is significantly faster than
the SAT-based approach. 
This is because, unlike the propositional logic representation of the binary causal model, the SMT representation avoids the need for universal quantification over sets of variables. We also computed explanations for BSNNs in the class $\mathscr{S}_{k}^{\mathit{tern}}$, using the SMT representation of their binary causal models and the corresponding notion of abductive explanation. Table \ref{tab:subtable2} reports the run-times for computing these explanations with the SMT-based approach, along with their average length. 

Figure \ref{AXpVisu} visualizes the abductive explanation found for the outputs of a network in the class $\mathscr{S}_{16}^{\mathit{bin}}$ at times 0 and 6. 

 Note that the set of 
 input neurons/features 
in the explanation
 is a subset of 
 the set of input neurons/features
 connected to the network's hidden layer.
 This is 
   in line with Proposition \ref{prop:existencelink}.
 \begin{figure}[H]
 \centering
     \includegraphics[width=\columnwidth]{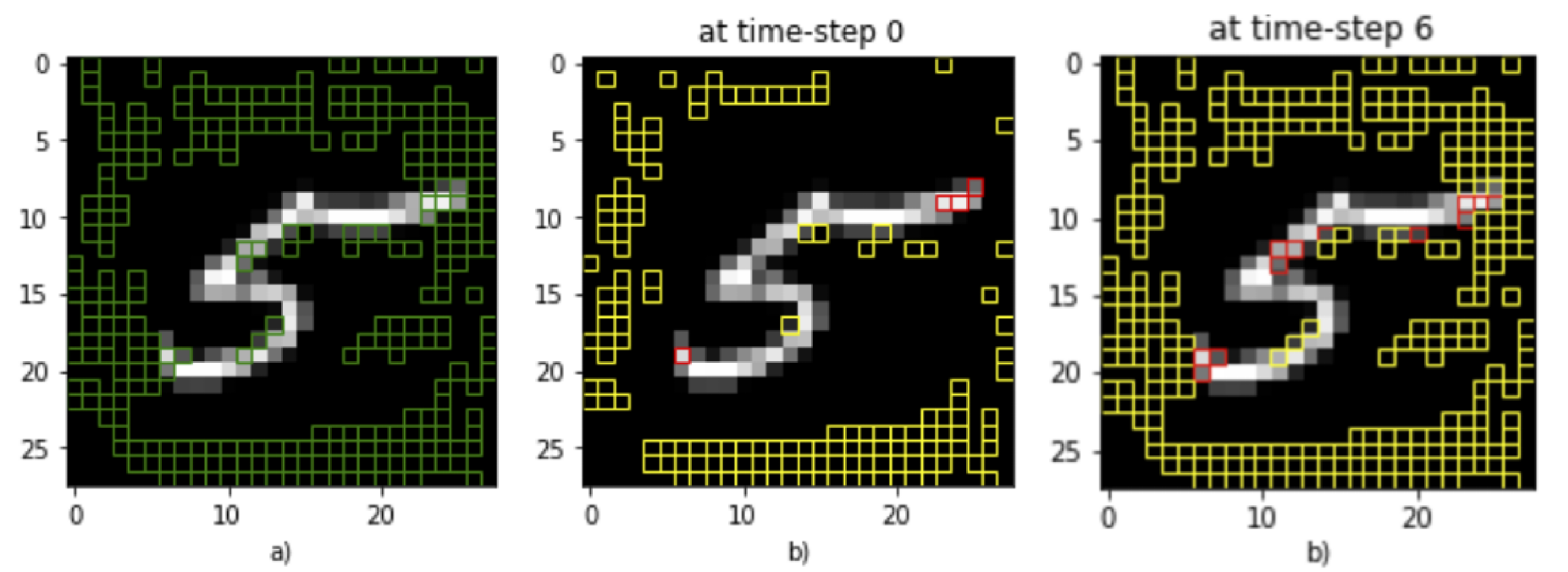} 
    \caption{Image of digit 5  (a) showing in green the input neurons/features
    being connected with the network's hidden layer; (b) the found AXps at times  $0$ and $6$ showing in red the active input  neurons/features
    (i.e., the positive literals)
 and in yellow
the 
    non-active input neurons/features
    (i.e., the negative literals)    mentioned in the explanation. }\label{AXpVisu} 
 \end{figure}
 
 \section{Comparison with SHAP}\label{sec:shap}

In this section, we  compare  our logic-based explainability  method 
with SHAP, a popular   method 
 widely used for interpreting
 predictions
 of machine learning models \cite{nips_shap}. 
For our experiments, we used the pre-existing implementation of 
SHAP library in Python
available at \url{https://github.com/shap/shap}. SHAP  assigns relevance scores to input features based on a sample of the input space 
without taking into consideration  the internal dynamics of the model.

\begin{table}[h]
\centering
{\footnotesize 
\begin{tabular}{|c|c|c|}
\hline
Sample size & Mean  & Features wrongly    \\ 
 & computation time (s) & considered relevant (\%) \\ \hline
1000000 & 173.6 & 36.95 \\
100000 & 38.3 & 46.34 \\
10000 & 4.7 & 57.45 \\ \hline
\end{tabular}
}
\caption{Percentage of features wrongly considered relevant by SHAP.}
\label{table3}
\end{table}

Unlike our method, SHAP does not look inside the neural network and does not model
the network's internal causal structure.  
Despite its widespread use, it has recently been shown that SHAP can provide misleading information about the relative importance of features for classification \cite{shap_failings,shap_refute_additional,shap_refute,shap_scores}.
As discussed in  \cite{trustableAI},
another limitation
of SHAP is that, unlike abductive explanation, it does not take minimality
of an explanation into account. 
To compare SHAP with our method, we fixed a threshold $\delta  $ for the SHAP
score and
then identified  the set of relevant features as those features whose SHAP score is strictly higher than 
$\delta  $
if positive and 
strictly lower than $-\delta  $
  if negative. 
We observed that SHAP 
considered
relevant some input
features having zero weight
connections 
with the network's hidden layer. 
This aspect
is visually represented in Figure \ref{fig:SHAPrelevance}. 
\begin{figure}[htb]
    \begin{center}
    \includegraphics[width=195pt]{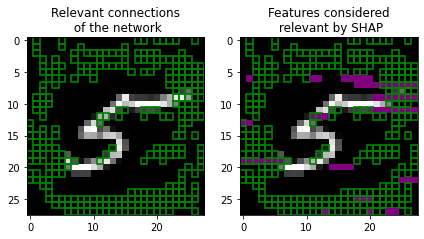} 
    \caption{Green
    features in the two
    figures are those having
    non-zero weight connections
    with the network's  hidden layer.
    Features in purple
    on the right figure
    are considered relevant by SHAP.} \label{fig:SHAPrelevance}
    \end{center}
\end{figure}
This  is a consequence of the model-agnostic nature of ``black box'' explainability methods
like 
SHAP.

Table \ref{table3} summarizes the results on the time required to compute the SHAP score for an input feature, as well as the percentage of features with zero-weight connections to the hidden layer that SHAP incorrectly identified as relevant, across different sample space sizes. On average, 47\% of the input features deemed relevant by SHAP had zero-weight connections to the network's hidden layer. 
As the table  shows, increasing the sample space size reduces the percentage of wrongly considered features, but at the cost of increased computation time.
The performance of SHAP contrasts sharply with what is demonstrated in Proposition \ref{prop:existencelink}: our method guarantees that explanations never include input features with zero-weight connections to the hidden layer.

\section{Conclusion}\label{concl}

Let us take stock. We proposed a causal analysis of Binary Spiking Neural Networks (BSNNs) by mapping their spiking dynamics to binary causal models (BCMs). This mapping enabled the computation of abductive explanations for BSNN decisions in the context of the MNIST classification task, using both SAT-based and SMT-based approaches. Additionally, we compared our logic-based method to SHAP and demonstrated that, unlike SHAP, our approach reliably excludes causally irrelevant features from explanations. 
In the current work, we focused exclusively on the notion of abductive explanation (AXp).
Future research will aim to extend our causal analysis of BSNNs to encompass more sophisticated concepts, including actual cause \cite{HalpernPearl2005a} and NESS (Necessary Element of a Sufficient Set) cause \cite{DBLP:conf/aaai/Beckers21a,DBLP:conf/kr/Halpern08a}. 
Our causal framework provides the necessary expressiveness to formally capture these concepts, and we believe that the logic-based approach we employed for computing abductive explanations can be extended to compute these  notions as well. Another promising direction for future research is to develop a causal analysis of convolutional BSNNs (C-BSNNs) \cite{srinivasan2019restocnet} following the approach presented here. We anticipate that incorporating convolutional layers could enhance accuracy on more complex datasets. Finally, we plan to extend our logic-based causal framework beyond simple visual classification tasks, applying it to explain BSNNs trained on language datasets \cite{Dbal_sengupta}.

\Appendix

In this appendix, we present 
i) 
the proofs of the mathematical
results
presented in the paper (Section \ref{bcm:proofs}),
and
ii) 
the SMT encoding of the causal
model for a BSNN
with Boolean weights
and for a BSNN
with ternary weights
(Section \ref{appendix:smt}).

\section{Proofs} \label{bcm:proofs}

\subsection{Proof
of Theorem \ref{theo:character} 
}\label{bcm:proofs1}

\begin{proof}
($\Rightarrow$) We first prove the left-to-right direction.
Suppose
i) 
$ 
 (\mathcal{F}_X)_{X \in \mathbf{N}  } $
 is 
$S$-compatible up to time $\mathsf{t}_{\mathit{end}}$
 and ii) $
 \forall X \in \mathbf{N}, 
 \forall t \leq  \mathsf{t}_{\mathit{end}}, 
 \mathcal{F}_X(t)= 
  \mathcal{I }(p_{X,t })$.
We are going to  prove that
   $ \mathcal{I }
 \models \mathcal{E}(p_{X,t})$
   for every 
  $t\in \{0,\ldots, \mathsf{t}_{\mathit{end}}\} $
  and for every $X\in \mathbf{L}$.
  The  case $t=0$
is evident. In fact,
$\mathcal{I }(p_{X,0} )= \mathcal{F}_X(0)=0$
by i) and ii). 
Moreover, $\mathcal{I }(p_{X,0} )=0$ iff $ \mathit{Val}( \mathcal{I }, p_{X,0}
\leftrightarrow \bot)=1 $,
and 
 $ \mathit{Val}( \mathcal{I }, p_{X,0}
\leftrightarrow \bot)=1 $ iff $\mathcal{I } \models p_{X,0}
\leftrightarrow \bot$. Thus, 
$\mathcal{I } \models p_{X,0}
\leftrightarrow \bot$
which is equivalent to
$\mathcal{I } \models \mathcal{E}(p_{X,0}) $. 
Let us prove the case $t>0$ by reductio ad absurdum.
Suppose, toward a contradiction, that 
$\mathcal{I } \not  \models \mathcal{E}(p_{X,t}) $.
The latter
is equivalent to  $ \mathit{Val}( \mathcal{I },
 \mathcal{E}(p_{X,t}) )=0 $
which is equivalent to iii)
$ \mathit{Val}( \mathcal{I },
p_{X,t} )=0 $
and
$ \mathit{Val}( \mathcal{I }, \chi )=1 $, or iv) 
$ \mathit{Val}( \mathcal{I },
p_{X,t} )=1 $
and
$ \mathit{Val}( \mathcal{I }, \chi )=0 $,
where $\chi$
abbreviates the following Boolean expression:
\begin{align*}
   \chi=_{\mathit{def}}    \Big( \neg p_{ X , t-1} \rightarrow 
    \bigvee\limits_{ \substack{ \Omega \subseteq \mathcal{R}^+(X):   \\  
      \mathcal{A}(X,t-1)  
    +|\Omega | \geq   \tau_X }} \big(  \bigwedge\limits_{X' \in \Omega } p_{X', t } \big) \Big) \wedge \\   \Big(   p_{ X , t-1} \rightarrow 
    \bigvee\limits_{ \substack{ \Omega  \subseteq \mathcal{R}^+(X) : 
     \\     |\Omega | \geq   \tau_X }} \big(\bigwedge\limits_{X' \in \Omega } p_{X', t }\big) \Big) . 
\end{align*}
Suppose iii) holds. On the one hand,
we 
have 
$ \mathit{Val}( \mathcal{I },
p_{X,t} )=0 $
iff 
$\mathcal{I }(p_{X,t} )=0$,
and, by i) and  ii), we have 
$\mathcal{I }(p_{X,t} )=0$
iff 
$\mathcal{F}_X(t)=\Theta\big(    
    \mathcal{A}(X,t) - \tau_X 
       \big) =0$. 
       Hence, by iii), we have 
       $ \Theta\big(    
    \mathcal{A}(X,t) - \tau_X 
       \big) =0$.
       On the other hand, by ii), 
       it is routine mathematical exercise
       to verify that 
    $\mathit{Val}(\mathcal{I }, \chi)=  \Theta\big(    
    \mathcal{A}(X,t) - \tau_X 
       \big)$. 
       Hence, by iii), we have that 
       $\Theta\big(    
    \mathcal{A}(X,t) - \tau_X 
       \big)= 1$
       which leads to a contradiction. 
   In an analogous way we can prove 
   that iv) leads to a contradiction.

($\Leftarrow$) We are going to prove the right-to-left direction.
Suppose
i) 
 $ \mathcal{I }
 \models \bigwedge_{ p_{X,t} \in  \mathbf{V}_S}
 \mathcal{E}_S(p_{X,t})$
 and ii) $
 \forall X \in \mathbf{N}, 
 \forall t \leq  \mathsf{t}_{\mathit{end}}, 
 \mathcal{F}_X(t)= 
  \mathcal{I }(p_{X,t })$.
We are going to  prove that
  $ (\mathcal{F}_X)_{X \in \mathbf{N}  } $
 is  $S$-compatible up to time $\mathsf{t}_{\mathit{end}}$,
 that is, 
$\mathcal{F}_X(0)=0$
and 
$\mathcal{F}_X(t)=\Theta\big(    
    \mathcal{A}(X,t) - \tau_X 
       \big)$
for every $0 <t \leq \mathsf{t}_{\mathit{end}}$. 
  The  case $t=0$
is evident. 
In fact, $\mathcal{I }(p_{X,0} )=0$ iff $ \mathit{Val}( \mathcal{I }, p_{X,0}
\leftrightarrow \bot)=1 $,
and 
 $ \mathit{Val}( \mathcal{I }, p_{X,0}
\leftrightarrow \bot)=1 $ iff $\mathcal{I } \models p_{X,0}
\leftrightarrow \bot$. 
Thus, 
$\mathcal{I }(p_{X,0} )= \mathcal{F}_X(0)=0$
by i) and ii). 
Let us prove the case $0 <t \leq \mathsf{t}_{\mathit{end}}$ by reductio ad absurdum.
Suppose, toward a contradiction, that 
$\mathcal{F}_X(t) \neq  \Theta\big(    
    \mathcal{A}(X,t) - \tau_X 
       \big)$. 
By i), we have
$\mathcal{I }  \models \mathcal{E}_S(p_{X,t}) $.
The latter
is equivalent to  $ \mathit{Val}( \mathcal{I },
 \mathcal{E}_S(p_{X,t}) )=1 $
which is equivalent to iii)
$ \mathit{Val}( \mathcal{I },
p_{X,t} )=1 $
and
$ \mathit{Val}( \mathcal{I }, \chi )=1 $, or iv) 
$ \mathit{Val}( \mathcal{I },
p_{X,t} )=0 $
and
$ \mathit{Val}( \mathcal{I }, \chi )=0 $,
where $\chi$
is the same abbreviation
as in the proof
of the $\Rightarrow$-direction.
Suppose iii) holds. On the one hand,
we 
have 
$ \mathit{Val}( \mathcal{I },
p_{X,t} )=1 $
 iff 
$\mathcal{I }(p_{X,t} )=1 $,
and, by   ii), we have 
$\mathcal{I }(p_{X,t} )=
\mathcal{F}_X(t)  $. 
       Hence, by iii), we have 
       $ \mathcal{F}_X(t) =1$.
       On the other hand, by ii), 
       it is routine mathematical exercise
       to verify that 
    $\mathit{Val}(\mathcal{I }, \chi)=  \Theta\big(    
    \mathcal{A}(X,t) - \tau_X 
       \big)$. 
       Hence, by iii), we have that 
       $\Theta\big(    
    \mathcal{A}(X,t) - \tau_X 
       \big)= 1$
       and, consequently, 
         $\mathcal{F}_X(t)= 1$.
         This leads to a contradiction. 
   In an analogous way we can prove 
   that iv) leads to a contradiction.
\end{proof}

\subsection{Proof
of Proposition \ref{prop:existencelink}  
}\label{bcm:proofs2}

\begin{proof}
    Suppose i) 
  the term   $\lambda= p_{\mathfrak{I}_{x,y }, t }  \wedge  \lambda'$
    is  an abductive explanation of 
$ \mathsf{out}_{S_{k}^{\mathit{bin} },t } $ and, toward a contradiction, 
  ii)   $   \not  \exists \mathfrak{H}_{z } \in \mathbf{H}^k  \text{ such that }
   \mathfrak{I}_{x,y } \in  \mathcal{R}^+ ( \mathfrak{H}_{z })$.
By ii), we have that iii) for every $p_{X,t' }\in \mathbf{V}_{S_{k}^{\mathit{bin} }} $
   the Boolean equation $\mathcal{E}(p_{X,t'  }) $
   does not contain the variable $p_{\mathfrak{I}_{x,y }, t }$.    
Moreover,
by the definition of a term 
and since $ p_{\mathfrak{I}_{x,y }, t } \in \mathbf{U}_{S_{k}^{\mathit{bin} }}$, 
iv) 
$p_{\mathfrak{I}_{x,y }, t }$ does not appear in $\lambda' $
and 
$p_{\mathfrak{I}_{x,y }, t }$ does not appear in $ \mathsf{out}_{S_{k}^{\mathit{bin} },t }  $. 
By iii) and iv), 
we have that v) 
$    \models
    \big(\bigwedge_{p_{X,t' } \in \mathbf{V}_{S_{k}^{\mathit{bin} }}  }
    \mathcal{E}(p_{X,t' }) \wedge 
   p_{\mathfrak{I}_{x,y }, t }  \wedge  \lambda' \big)  \rightarrow \mathsf{out}_{S_{k}^{\mathit{bin} },t } $ iff
   $    \models
    \big(\bigwedge_{p_{X,t' } \in \mathbf{V}_{S_{k}^{\mathit{bin} }}  }
    \mathcal{E}(p_{X,t' }) \wedge   \lambda' \big)  \rightarrow \mathsf{out}_{S_{k}^{\mathit{bin} },t } $. 
   Item i) implies that $    \models
    \big(\bigwedge_{p_{X,t' } \in \mathbf{V}_{S_{k}^{\mathit{bin} }}  }
    \mathcal{E}(p_{X,t' }) \wedge 
   p_{\mathfrak{I}_{x,y }, t }  \wedge  \lambda' \big)  \rightarrow \mathsf{out}_{S_{k}^{\mathit{bin} },t } $ and $ \not
        \models  \big(\bigwedge_{p_{X,t' } \in \mathbf{V}_{S_{k}^{\mathit{bin} }} }
    \mathcal{E}(p_{X,t' }) \wedge 
    \lambda'  \big) \rightarrow \mathsf{out}_{S_{k}^{\mathit{bin} },t } $,
    which is in  contradiction  with v). 
\end{proof}

\section{SMT Encodings}\label{appendix:smt}

In this section 
we present
the SMT representations
(or encodings) 
of the binary 
causal
model for a BSNN
with Boolean weights
and for a BSNN
with ternary weights.

\subsection{Boolean weights}

Given 
 the architecture of a SNN with Boolean weights
 $S = \big(\mathbf{I}, \mathbf{L}, 
\mathcal{R},\mathcal{W},
\{ 0, 1 \} 
,(\tau_X )_{ X \in  \mathbf{L} }
\big)$, 
the SMT
representation
of the binary causal
model for
$S$
is a triplet 
$\Gamma_S  = \big(\mathbf{U}_S, 
\mathbf{V}_S,   \mathcal{E }_S^{smt}
\big)$
where $\mathbf{U}_S$
and $\mathbf{V}_S$
are, respectively, the 
the sets of exogeneous and endogeneous variables 
in the sense of Definition \ref{def:causalmodel}, 
and $\mathcal{E }_S^{smt}$
is the function mapping each
endogenous variable in 
$\mathbf{V}_S$ to a SMT
expression 
such that, $\forall X \in \mathbf{L} $:
\begin{align*}
    \mathcal{E }_S^{smt}(  
    p_{ X , 0 })  =     p_{ X , 0 }= 0, 
\end{align*}
and     
  for   $t>0$:
\begin{align*}
   \mathcal{E }_S^{smt}(  p_{X,t} )= \Big( 
    p_{ X , t } = 1  \leftrightarrow \big( (p_{ X , t-1 } = 0 \rightarrow  \sum_{X' \in \mathcal{R}^+(X)} p_{ X' , t } \\ + \mathcal{A}(X,t-1)  \geq   \tau_X) \ \wedge \\ (p_{ X , t-1 } = 1 \rightarrow  \sum_{X' \in \mathcal{R}^+(X)} p_{ X' , t } \geq   \tau_X) \big) \Big).
\end{align*}

In order to compute abductive
explanations using the SMT encoding
we use a translation of Definition
\ref{def:axpdef}
based on SAT 
into SMT. 
Specifically, let 
$S$
be a SNN with Boolean weights,
$\Gamma_S  = \big(\mathbf{U}_S, 
\mathbf{V}_S, \mathcal{E }_S
\big)$ its BCM, 
$\mathcal{I }_{\mathbf{U} } :
\mathbf{U}  
\longrightarrow \{0,1\}
$
a 
Boolean
interpretation for
the exogenous   variables in $\mathbf{U}_S$,
$\lambda \in \mathit{Term}_{\mathbf{U}_S} $  
and $\omega_0$
a Boolean expression 
built from $   \mathbf{V}_S$. 
We can check whether  $\lambda$
is an abductive explanation
(AXp) of
$\omega_0$
relative to $\Gamma_S $ and  $\mathcal{I }_{\mathbf{U}_S } $
by checking whether the following three
SMT
conditions 
are satisfied:
\begin{align*} 
 i)  & \ \mathcal{I }_{\mathbf{U}_S }  \models     \mathit{tr}(\lambda)  ,\\
    ii)  &
    \models
    \big(\bigwedge_{p \in \mathbf{V}_S }
      \mathcal{E }_S^{smt}(p) \wedge 
  \mathit{tr}( \lambda)
      \wedge  \mathsf{hyp}_S  
      \big)  \rightarrow    \mathit{tr}(\omega_0),\\
        iii) & \
        \forall \lambda' \subset \lambda, \not
        \models  \big(\bigwedge_{p \in \mathbf{V}_S }
     \mathcal{E }_S^{smt}(p) \wedge 
      \mathit{tr}( \lambda' )
      \wedge  \mathsf{hyp}_S  
      \big) \rightarrow    \mathit{tr}
      (\omega_0) ,
\end{align*}
where
\begin{align*}
    \mathit{tr}(p_{X,t}) &= 
    (p_{X,t}=1),\\
        \mathit{tr}(\neg \omega ) &=
        \neg     \mathit{tr}(\omega),\\
               \mathit{tr}(\omega_1\wedge \omega_2 ) &=
           \mathit{tr}(\omega_1)
           \wedge     \mathit{tr}(\omega_2),
\end{align*}
and
 \begin{align*}
 \mathsf{hyp}_S  
 =_{\mathit{def}}
 \bigwedge_{p_{ X , t } \in
 \mathbf{U}_S \cup
\mathbf{V}_S
 }
 \big( p_{ X , t } = 1 \vee p_{ X , t } = 0 \big) .
\end{align*}

\subsection{Ternary  weights}

For a BSNN with ternary weights
$S = \big(\mathbf{I}, \mathbf{L}, 
\mathcal{R},\mathcal{W},
\{-1, 0, 1 \} 
,(\tau_X )_{ X \in  \mathbf{L} }
\big)$
we just need a 
different SMT representation 
of its binary causal model.
In particular, 
in the case of ternary weights
the function 
$\mathcal{E }_S^{smt}$
 should map each
endogenous variable in 
$\mathbf{V}_S$ to the following SMT
expressions, $\forall X \in \mathbf{L} $: 
\begin{align*}
    \mathcal{E }_S^{smt}(  
    p_{ X , 0 })  =     p_{ X , 0 }= 0, 
\end{align*}
and     
  for   $t>0$:
\begin{align*}
    \mathcal{E }_S^{smt}(  p_{X,t} )=  \Big( p_{ X , t } = 1  \leftrightarrow \big( (p_{ X , t-1 } = 0 \rightarrow  \sum_{X' \in \mathcal{R}^+(X)} p_{ X' , t } \\ - \sum_{X'' \in \mathcal{R}^-(X)} p_{ X'' , t }  + \mathcal{A}(X,t-1)  \geq   \tau_X) \ \wedge \\ (p_{ X , t-1 } = 1 \rightarrow  \sum_{X' \in \mathcal{R}^+(X)} p_{ X' , t }-\sum_{X'' \in \mathcal{R}^-(X)} p_{ X'' , t }  \geq   \tau_X) \big) \Big).
\end{align*}








\bibliographystyle{LNGAI}
\bibliography{LNGAI}

@String{Computing = "Computing" }

@String{Computer = "{IEEE} Computer" }

@String{Springer = "Springer-Verlag" }

@inproceedings{DBLP:conf/ijcai/AudemardBLM23,
  author       = {G. Audemard and
                  S. Bellart and
                  J.{-}M. Lagniez and
                  P. Marquis},
  title        = {Computing Abductive Explanations for Boosted Regression Trees},
  booktitle    = {Proceedings of the Thirty-Second International Joint Conference on
                  Artificial Intelligence (IJCAI 2023)},
  pages        = {3432--3441},
  publisher    = {ijcai.org},
  year         = {2023}
}

@inproceedings{DBLP:conf/kr/AudemardKM20,
  author       = {G. Audemard and
                  F. Koriche and
                  P. Marquis},
  title        = {On Tractable {XAI} Queries based on Compiled Representations},
  booktitle    = {Proceedings of the 17th International Conference on Principles of
                  Knowledge Representation and Reasoning (KR 2020)},
  pages        = {838--849},
  year         = {2020}
}

@inproceedings{ijcai2024p52,
  title     = {Relevant Irrelevance: Generating Alterfactual Explanations for Image Classifiers},
  author    = {Mertes, S. and Huber, T. and Karle, C. and Weitz, K. and Schlagowski, R. and Conati, C. and André, E. },
  booktitle = {Proceedings of the Thirty-Third International Joint Conference on
               Artificial Intelligence {IJCAI 2024}},
  publisher = {ijcai.org },
  pages     = {467--475},
  year      = {2024}
}

@inproceedings{DBLP:journals/corr/abs-2008-05803,
  author       = {J. Marques-Silva and
                  T. Gerspacher and
                  M. C. Cooper and
                  A. Ignatiev and
                  N. Narodytska},
  title        = {Explaining Naive Bayes and Other Linear Classifiers with Polynomial Time and Delay},
  booktitle    = {Advances in Neural Information Processing Systems 33: Annual Conference on Neural Information Processing Systems 2020},
  year         = {2020}
}

@inproceedings{DBLP:conf/ijcai/Izza021,
  author       = {Y. Izza and
                  J. Marques-Silva},
  title        = {On Explaining Random Forests with {SAT}},
  booktitle    = {Proceedings of the Thirtieth International Joint Conference on Artificial
                  Intelligence (IJCAI 2021)},
  pages        = {2584--2591},
  publisher    = {},
  year         = {2021} 
}

@incollection{BSST09,
   author = {C. Barrett and R. Sebastiani and S. Seshia and
	C. Tinelli},

   title = {Satisfiability Modulo Theories},
   booktitle = {Handbook of Satisfiability},
   series = {Frontiers in Artificial Intelligence and Applications},
   volume = {185},
   chapter = {26},
   pages = {825--885},
   publisher = {IOS Press},
   year = {2009}
}

@article{DBLP:journals/logcom/LiuL23,
  author       = {X. Liu and
                  E. Lorini},
  title        = {A unified logical framework for explanations in classifier systems},
  journal      = {Journal of  Logic and  Computation },
  volume       = {33},
  number       = {2},
  pages        = {485--515},
  year         = {2023}
}

@article{HalpernPearl2005a,
  title={Causes and explanations:
a structural-model approach. {P}art {I}: Causes},
  author={Halpern, J. Y. AND  Pearl, J.  },
  journal={British
Journal for Philosophy of Science},
  volume={56},
  number={4},
  pages={843--887},
  year={2005}
}

@inproceedings{DBLP:conf/kr/Halpern08a,
  author    = {J. Y. Halpern},
  title     = {Defaults and Normality in Causal Structures},
  booktitle = {Principles of Knowledge Representation and Reasoning: Proceedings
               of the Eleventh International Conference (KR 2008) },
  pages     = {198--208},
  publisher = {{AAAI} Press},
  year      = {2008},
  url       = {},
  bibsource = {}
}

@inproceedings{DBLP:conf/aaai/Beckers21a,
  author    = {S. Beckers},
  title     = {The Counterfactual {NESS} Definition of Causation},
  booktitle = {Proceedings of the Thirty-Fifth {AAAI} Conference on Artificial Intelligence (AAAI-21) },
  pages     = {6210--6217},
  publisher = {{AAAI} Press},
  year      = {2021}
}

@inproceedings{DBLP:conf/aaai/Potyka21,
  author       = {N. Potyka},
  title        = {Interpreting Neural Networks as Quantitative Argumentation Frameworks},
  booktitle    = {Proceedings
      of the Thirty-Fifth {AAAI} Conference on Artificial Intelligence (AAA-21) },
  pages        = {6463--6470},
  publisher    = {{AAAI} Press},
  year         = {2021}
}

@inproceedings{DBLP:conf/ecai/AyoobiPT23,
  author       = {H. Ayoobi and
                  N. Potyka and
                  F. Toni},
  title        = {SpArX: Sparse Argumentative Explanations for Neural Networks},
  booktitle    = {
Proceedings of the 26th
       European Conference
       on Artificial
       Intelligence (ECAI 2023)},
  series       = {Frontiers in Artificial Intelligence and Applications},
  volume       = {372},
  pages        = {149--156},
  publisher    = {{IOS} Press},
  year         = {2023}
}

@InProceedings{NIPS2016_d8330f85,
 author = {I. Hubara  and M. Courbariaux  and D. Soudry  and R. El-Yaniv  and Y. Bengio },
 booktitle = {Advances in Neural Information Processing Systems},
 editor = {D. Lee and M. Sugiyama and U. Luxburg and I. Guyon and R. Garnett},
 pages = {4107--4115},
 publisher = {Curran Associates, Inc.},
 title = {Binarized Neural Networks},
 year = {2016} 
}

@InProceedings{10.1007/978-3-319-46493-0_32,
author="M. Rastegari
and V. Ordonez
and J. Redmon
and A. Farhadi",
editor="Leibe, Bastian
and Matas, Jiri
and Sebe, Nicu
and Welling, Max",
title="XNOR-Net: ImageNet Classification Using Binary Convolutional Neural Networks",
booktitle="Computer Vision -- ECCV 2016",
year="2016",
publisher="Springer International Publishing",
address="Cham",
pages="525--542",
isbn=""
}

@article{bengio2013estimating,
  author       = { Y. Bengio and
                   N. L{\'{e}}onard and
                   A. Courville},
  title        = {Estimating or Propagating Gradients Through Stochastic Neurons for
                  Conditional Computation},
  journal      = {CoRR},
  volume       = {abs/1308.3432},
  year         = {2013},
  url          = {},
  eprinttype    = {arXiv},
  eprint       = {1308.3432},
  bibsource    = {}
}

@inproceedings{Tang_Hua_Wang_2017,
  author       = {W. Tang and
                  G. Hua and
                  L. Wang},
  title        = {How to Train a Compact Binary Neural Network with High Accuracy?},
  booktitle    = {Proceedings of the Thirty-First {AAAI} Conference on Artificial Intelligence (AAAI-17)},
  pages        = {2625--2631},
  publisher    = {{AAAI} Press},
  year         = {2017} 
}

@article{QIN2020107281,
title = {Binary neural networks: A survey},
journal = {Pattern Recognition},
volume = {105},
pages = {107281},
year = {2020},
issn = {0031-3203},
doi = {},
url = {},
author = { H. Qin and  R. Gong and  X. Liu and X. Bai and J. Song and N. Sebe}
}

@inproceedings{DBLP:conf/ecai/DarwicheH20,
  author    = {A. Darwiche and A. Hirth },
  title     = {On the Reasons Behind Decisions},
  booktitle = {Proceedings of the 24th European Conference on Artificial Intelligence ({ECAI} 2020) },
  pages     = {712--720},
  publisher = {{IOS} Press},
  year      = {2020},
  series = {}
}

@inproceedings{ignatiev2019abduction,
  title={Abduction-based explanations for machine learning models},
  author={A. Ignatiev and N. Narodytska and J. Marques-Silva },
  booktitle={Proceedings of the Thirty-third AAAI Conference on Artificial Intelligence (AAAI-19)},
  pages={1511--1519},
  year={2019}
}

@inproceedings{DBLP:conf/kr/ShiSDC20,
  author       = {W. Shi and
                  A. Shih and
                  A. Darwiche and
                  A. Choi},
  title        = {On Tractable Representations of Binary Neural Networks},
  booktitle    = {Proceedings of the 17th International Conference on Principles of
                  Knowledge Representation and Reasoning (KR 2020)},
  pages        = {882--892},
  year         = {2020}
}

@article{DBLP:journals/ai/CooperS23,
  author       = {M. C. Cooper and
                  J. Marques-Silva},
  title        = {Tractability of explaining classifier decisions},
  journal      = {Artificial  Intelligence },
  volume       = {316},
  pages        = {103841},
  year         = {2023}
}

@article{DBLP:journals/ker/Miller21,
  author       = {T. Miller},
  title        = {Contrastive explanation: a structural-model approach},
  journal      = {The Knowledge Engineering Review},
    volume       = {36},
  year         = {2021} 
}

@article{DBLP:journals/corr/abs-2010-10596,
  author       = {S. Verma and
                  J. P. Dickerson and
                  K. Hines},
  title        = {Counterfactual Explanations for Machine Learning: {A} Review},
  journal      = {CoRR},
  volume       = {},
  year         = {2020},
  url          = {},
  eprinttype    = {arXiv},
  eprint       = {2010.10596}
}

@inproceedings{DBLP:conf/fair/Marquis91,
  author       = {P. Marquis},
  title        = {Extending abduction from propositional to first-order logic},
  booktitle    = {Procedings 
                   of the  International Workshop on 
                   Fundamentals of Artificial Intelligence Research
                 (FAIR'91) },
  series       = {LNCS },
  volume       = {},
  pages        = {141--155},
  publisher    = {Springer},
  year         = {1991}
}

@inproceedings{DBLP:conf/ijcai/ShihCD18,
  author       = {A. Shih and
                  A. Choi and
                  A. Darwiche},
  title        = {A Symbolic Approach to Explaining Bayesian Network Classifiers},
  booktitle    = {Proceedings of the Twenty-Seventh International Joint Conference on
                  Artificial Intelligence (IJCAI 2018)},
  pages        = {5103--5111},
  publisher    = {},
  year         = {2018},
  url          = {},
  doi          = {},
  bibsource    = {}
}

@article{DBLP:journals/jair/ChocklerH04,
  author       = {H. Chockler and
                  J. Y. Halpern},
  title        = {Responsibility and Blame: {A} Structural-Model Approach},
  journal      = {Journal of Artificial Intelligence Research },
  volume       = {22},
  pages        = {93--115},
  year         = {2004},
  url          = {},
  doi          = {},
  bibsource    = {}
}

@article{DBLP:journals/jair/AleksandrowiczC17,
  author       = {G. Aleksandrowicz and
                  H. Chockler and
                  J. Y. Halpern and
                  A.  Ivrii},
  title        = {The Computational Complexity of Structure-Based Causality},
  journal      = {Journal of Artificial Intelligence Research },
  volume       = {58},
  pages        = {431--451},
  year         = {2017},
  url          = {},
  doi          = {},
  bibsource    = {}
}

@inproceedings{DBLP:conf/ijcai/Lorini24,
  author       = {T. de Lima and E. Lorini},
  title        = {Model Checking Causality },
  booktitle    = {Proceedings of the Thirty-Third International Joint Conference on
                  Artificial Intelligence (IJCAI 2024)},
  publisher    = {ijcai.org},
  year         = {2024}
}

@book{Pearl2009,
  author =	 { J. Pearl},
  title =	 {Causality: Models, Reasoning and Inference},
  publisher =	 {Cambridge University Press},
  year =	 2009
}

@book{HalpernBook2016,
  title={Actual causality},
  author={J. Y. Halpern },
  year={2016},
  publisher={MIT Press}
}

@article{Halpern2000,
  title={Axiomatizing causal reasoning},
  author={J. Y. Halpern},
  journal={Journal of Artificial Intelligence Research },
  volume={12},
  pages={317--337},
  year={2000},
  publisher={JSTOR}
}

@article{Lu_2020,
   title={Exploring the Connection Between Binary and Spiking Neural Networks},
   volume={14},
   ISSN={1662-453X},
   url={},
   DOI={},
   journal={Frontiers in Neuroscience},
   publisher={Frontiers Media SA},
   author={S. Lu and A. Sengupta},
   year={2020},
   month=jun }

@article{bs4nn,
author = {S. R. Kheradpisheh and M. Mirsadeghi and T. Masquelier},
year = {2022},
month = {04},
pages = {},
title = {BS4NN: Binarized Spiking Neural Networks with Temporal Coding and Learning},
volume = {54},
journal = {Neural Processing Letters},
doi = {}
}

@article{neftci2019surrogate,
  title={Surrogate Gradient Learning in Spiking Neural Networks: Bringing the Power of Gradient-based optimization to spiking neural networks},
  author={E. O. Neftci and H. Mostafa and F. Zenke},
  journal={IEEE Signal Processing Magazine},
  year={2019},
  volume={36},
  pages={51-63},
  url={}
}

@ARTICLE{srinivasan2019restocnet,

AUTHOR={G. Srinivasan  and K. Roy },

TITLE={ReStoCNet: Residual Stochastic Binary Convolutional Spiking Neural Network for Memory-Efficient Neuromorphic Computing},

JOURNAL={Frontiers in Neuroscience},

VOLUME={13},

YEAR={2019},

URL={},

DOI={},

ISSN={1662-453X}}

@article{Jang2020BiSNNTS,
  title={BiSNN: Training Spiking Neural Networks with Binary Weights via Bayesian Learning},
  author={ H. Jang and N. Skatchkovsky and O. Simeone},
  journal={2021 IEEE Data Science and Learning Workshop (DSLW)},
  year={2020},
  pages={1-6},
  url={}
}

@inproceedings{ijcai2023p366,
  title     = {A Rule-Based Modal View of Causal Reasoning},
  author    = {E. Lorini},
  booktitle = {Proceedings of the Thirty-Second International Joint Conference on
               Artificial Intelligence (IJCAI 2023)},
  pages     = {3286--3295},
  year      = {2023}
}

@article{shap_failings,
  author       = {X. Huang and
                  J. Marques-Silva},
  title        = {On the failings of {S}hapley values for explainability},
  journal      = {International Journal
                  of Approximate Reasoning },
  volume       = {171},
  pages        = {109112},
  year         = {2024},
  url          = {},
  doi          = {},
  bibsource    = {dblp computer science bibliography, https://dblp.org}
}

@article{shap_scores,
  author       = {O. Letoffe and
                  X. Huang and
                  N. Asher and
                  J. Marques-Silva},
  title        = {From SHAP Scores to Feature Importance Scores},
  journal      = {CoRR},
  volume       = {},
  year         = {2024},
  url          = {},
  doi          = {},
  eprinttype    = {arXiv},
  eprint       = {2405.11766},
  bibsource    = {dblp computer science bibliography, https://dblp.org}
}

@article{shap_refute,
  author       = {X. Huang and
                  J. Marques-Silva},
  title        = {A Refutation of {S}hapley Values for Explainability},
  journal      = {CoRR},
  volume       = {abs/2309.03041},
  year         = {2023},
  url          = {},
  doi          = {},
  eprinttype    = {arXiv},
  eprint       = {2309.03041},
  bibsource    = {dblp computer science bibliography, https://dblp.org}
}

@article{shap_refute_additional,
  author       = {X. Huang and
                  J. Marques-Silva},
  title        = {Refutation of {S}hapley Values for {XAI} - Additional Evidence},
  journal      = {CoRR},
  volume       = {abs/2310.00416},
  year         = {2023},
  url          = {},
  doi          = {},
  eprinttype    = {arXiv},
  eprint       = {2310.00416},
  bibsource    = {dblp computer science bibliography, https://dblp.org}
}

@inproceedings{trustableAI,
  title     = {Towards Trustable Explainable {AI}},
  author    = {Ignatiev, A.},
  booktitle = {Proceedings of the Twenty-Ninth International Joint Conference on
               Artificial Intelligence, {IJCAI-20}},
  publisher = {ijcai.org},
  pages     = {5154--5158},
  year      = {2020},
  doi       = {},
  url       = {},
}

@inproceedings{nips_shap,
 author    = {Lundberg, S. M. and Lee, S. I.},
 booktitle = {Advances in Neural Information Processing Systems},
 editor    = {I. Guyon and U. Von Luxburg and S. Bengio and H. Wallach and R. Fergus and S. Vishwanathan and R. Garnett},
 publisher = {Curran Associates, Inc.},
 title     = {A Unified Approach to Interpreting Model Predictions},
 year      = {2017},

}

@article {rate_coding,
	author = {Prescott, S. A. and Sejnowski, T. J.},
	title = {Spike-Rate Coding and Spike-Time Coding Are Affected Oppositely by Different Adaptation Mechanisms},
	volume = {28},
	number = {50},
	pages = {13649--13661},
	year = {2008},
	doi = {},
	publisher = {Society for Neuroscience},
	issn = {},
	URL = {},
	eprint = {},
	journal = {Journal of Neuroscience}
}

@inproceedings{Dbal_sengupta,
  author       = {M. Bal and
                  A. Sengupta},
  title        = {SpikingBERT: Distilling {BERT} to Train Spiking Language Models Using
                  Implicit Differentiation},
  booktitle    = {Proceedings
      of the 
      Thirty-Eighth {AAAI} Conference on Artificial Intelligence (AAAI-24)  },
  pages        = {10998--11006},
  publisher    = {{AAAI} Press},
  year         = {2024}
}

\end{document}